\documentclass[twoside]{article}

\usepackage{amsfonts, amsthm}
\usepackage{graphicx}
\usepackage{xcolor}         

\newcommand{\comment}[1]{}

%
\usepackage[accepted]{aistats2022}
%



\bibliographystyle{apalike}

\newtheorem{theorem}{Theorem}[section]

\newtheorem{lemma}[theorem]{Lemma}

\begin{document}

%

%

\onecolumn

\aistatstitle{Supplementary Information for ``On the Implicit Bias of Gradient Descent for Temporal Extrapolation''}

\section{Proof of Lemma 3.1 (Main Paper): Population Risk for SISO}\label{app:missing_proofs}
\begin{lemma}{3.1}[Main Text]\label{lemma:expected_loss_apndx}
Assume $\mathbf{x}\sim \mathcal{D}$ such that $\mathbb{E}[\mathbf{x}]=0$ and $\mathbb{E}[\mathbf{x}\mathbf{x}^T]=I_k$, where $I_k \in \mathbb{R}^{k,k}$ is the identity matrix. Then, given a memoryless teacher RNN, the loss for the student RNN is given by:
\begin{equation}\label{eq:expected_loss_apdx}
    \mathbb{E}_{\mathbf{x},y}\left[\frac{1}{2}\left(RNN(\mathbf{x})-y\right)^2\right]=\frac{1}{2}\sum_{i=1}^{k-1}(CA^{k-i}B)^2+\frac{1}{2}(CB-w^*)^2
\end{equation}
\end{lemma}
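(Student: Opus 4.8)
The plan is to reduce everything to a linear form in the input coordinates and then exploit the isotropy assumption $\mathbb{E}[\mathbf{x}\mathbf{x}^\top]=I_k$ to kill all cross terms. First I would invoke Equation~\eqref{eq:unroll_output} to write the student's prediction as a linear functional of the input, introducing scalar coefficients $a_i := CA^{k-i}B$ (these are scalars since $C\in\mathbb{R}^{1\times d}$ and $B\in\mathbb{R}^{d\times 1}$), so that
\[
RNN(\mathbf{x})=\sum_{i=1}^{k}CA^{k-i}Bx_i=\sum_{i=1}^{k}a_i x_i,\qquad a_k=CB.
\]
Substituting the memoryless teacher's label $y=w^* x_k$ and grouping the $x_k$ term separately, the residual becomes $RNN(\mathbf{x})-y=\sum_{i=1}^{k-1}a_i x_i+(CB-w^*)x_k$, which is again a linear form $\sum_{i=1}^{k}b_i x_i$ with $b_i=a_i$ for $i<k$ and $b_k=CB-w^*$.

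Next I would expand the square and push the expectation inside, so that $\mathbb{E}\big[(RNN(\mathbf{x})-y)^2\big]=\sum_{i,j}b_i b_j\,\mathbb{E}[x_i x_j]$. Here the assumption $\mathbb{E}[\mathbf{x}\mathbf{x}^\top]=I_k$ enters decisively: it gives $\mathbb{E}[x_i x_j]=\delta_{ij}$, so every off-diagonal ($i\neq j$) contribution vanishes and only the diagonal survives, leaving $\sum_{i=1}^{k}b_i^2$. (The mean-zero hypothesis $\mathbb{E}[\mathbf{x}]=0$ from the appendix version is not actually needed once the teacher is linear in $x_k$, but I would keep it for consistency.)

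Finally I would re-expand $\sum_{i=1}^{k}b_i^2=\sum_{i=1}^{k-1}a_i^2+(CB-w^*)^2$ and reinstate $a_i=CA^{k-i}B$, noting that as $i$ ranges over $\{1,\dots,k-1\}$ the exponent $k-i$ ranges over $\{1,\dots,k-1\}$, so the sum is exactly $\sum_{i=1}^{k-1}(CA^{k-i}B)^2$. Dividing by two yields Equation~\eqref{eq:expected_loss_apdx}. I do not expect a genuine obstacle here; the argument is essentially bookkeeping, and the only point requiring care is the clean separation of the latest time step from the earlier ones and the verification that the isotropy condition is precisely what decouples the coordinates into a sum of squares.
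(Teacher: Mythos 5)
Your proposal is correct and follows essentially the same route as the paper's proof: expand the squared residual and use $\mathbb{E}[x_i x_j]=\delta_{ij}$ to eliminate all cross terms, leaving a sum of squared impulse-response coefficients plus the $(CB-w^*)^2$ term (your only cosmetic difference is folding the label into a single linear form $\sum_i b_i x_i$ before squaring, rather than expanding prediction, label, and cross term separately as the paper does). Your side remark is also accurate: the computation is purely quadratic in $\mathbf{x}$, so the assumption $\mathbb{E}[\mathbf{x}\mathbf{x}^\top]=I_k$ suffices and the mean-zero hypothesis in the appendix version is redundant, consistent with the main-text statement of the lemma which omits it.
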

\begin{proof}
\begin{equation*}
    \mathbb{E}_{\mathbf{x},y}\left[\frac{1}{2}\left(RNN(\mathbf{x})-y\right)^2\right]=\frac{1}{2}\mathbb{E}\left[ \left( \sum_{i=1}^k  CA^{k-i}Bx_i -w^*x_k \right)^2 \right]
\end{equation*}
The above can be written as

\begin{equation}
    \frac{1}{2}\mathbb{E}\left[ \left(\sum_{i=1}^k\left( CA^{k-i}Bx_i \right)\right)^2 +(w^*x_k)^2-2\sum_{j=1}^k CA^{k-j}Bx_j x_kw^*\right]
\end{equation}
Because $\mathbf{x}$ has identity covariance ($\mathbb{E}[x_i^2]=1$) many terms cancel out and the above is equal to
\begin{equation*}
    \frac{1}{2}\left[\sum_{i=1}^k (CA^{k-i}B)^2+ \left ((w^*)^2-2CBw^* \right )\right]
\end{equation*}
Removing $i=k$ from the summation, we have,
\begin{equation*}
    \frac{1}{2}\sum_{i=0}^{k-1}(CA^{k-i}B)^2+\frac{1}{2}\left ((CB)^2-2CBw^*+(w^*)^2\right )
\end{equation*}
The above can be written as
\begin{equation*}
    \frac{1}{2}\sum_{i=0}^{k-1}(CA^{k-i}B-0)^2+\frac{1}{2}(CB-w^*)^2
\end{equation*}
which concludes the proof.
\end{proof}
\vfill

\section{Details for Proof of Theorem 6.1}\label{app:derivatice_calc}
Here we the gradient computation for the proof of Theorem 6.1 in the main text.

Consider the expected loss in Lemma~\ref{lemma:expected_loss_apndx}. For $\frac{1}{2}(CB-w^*)^2$ the derivative w.r.t $B$ is
\begin{equation}\label{eq:grad_b_rhs}
    \frac{\partial \frac{1}{2}(CB-w^*)^2}{\partial B}=C^T(CB-w^*)
\end{equation}

For $j\ge 1$, the derivative of $\frac{1}{2}(CA^jB)^2$ w.r.t to $B$ is given by
\begin{equation}\label{eq:grad_b_lhs}
    \frac{\partial \frac{1}{2}(CA^jB)^2}{\partial B}=(A^j)^TC^TCA^jB=(A^T)^jC^TCA^jB
\end{equation}

Putting together Equations \eqref{eq:grad_b_rhs} and \eqref{eq:grad_b_lhs}, the derivative of (4) in the main text w.r.t. $B$ is given by
\begin{equation*}
    \frac{\partial \mathcal{L}}{\partial B}=\sum_{i=1}^{k-1}(A^T)^iC^TCA^iB+C^T(CB-w^*)
\end{equation*}

A similar derivation w.r.t. $C$ yields:
\begin{equation*}
    \frac{\partial \mathcal{L}}{\partial C}=\sum_{i=1}^{k-1}CA^iBB^T(A^T)^i+(CB-w^*)B^T
\end{equation*}

For the gradient w.r.t. $A$, $\forall i\ge 1$, the derivative of $\frac{1}{2}(CA^iB)^2$ is based on Equation (91) from \cite{petersen2012matrix}.
\begin{equation*}
    \frac{\partial \frac{1}{2}(CA^iB)^2}{\partial A}=\sum_{r=0}^{i-1}(A^r)^TC^TCA^iBB^T(A^{i-1-r})^T
\end{equation*}

Using $(A^j)^T=(A^T)^j$ and summing over $i=1,\dots,k-1$ results in:
\begin{equation}
    \frac{\partial \mathcal{L}}{\partial A}=\sum_{i=1}^{k-1}\sum_{r=0}^{i-1}(A^T)^rC^TCA^iBB^T(A^T)^{i-r-1}
\end{equation}
\section{Multiple Input Multiple Output}\label{sec:appendix_mimo}
In this section we discuss the extension of our results to the case of \textit{Multiple Input Multiple Output} (MIMO) systems. In what follows we denote the input dimension by $n$, and the output dimension, $m$.

Consider an RNN with hidden width $d$, input sequence $\{ X_t \}_{t = 1}^\infty \subset \mathbb{R}^n$ representing a sequence of $n$-dimensional inputs, denote the $i^{th}$ column of $X$ by $X_i$. The model produces outputs $\{ \hat{\mathbf{y}}_t \}_{t = 1}^\infty \subset \mathbb{R}^m$ through the following update equations:
\begin{equation}\label{eq:output_of_rnn_mimo}
    \hat{\textbf{y}}_t=C s_t,\qquad s_{t+1}=As_t+B X_{t+1},
\end{equation}
where $A\in\mathbb{R}^{d\times d}$, $B\in\mathbb{R}^{d\times n}$ and $C\in\mathbb{R}^{m\times d}$ are the learned parameters, and $\{ s_t \}_{t = 1}^\infty \subset \mathbb{R}^d$ are the resulting hidden states, where by assumption $s_0=0$. Given an input sequence $X\in\mathbb{R}^{n\times k}$, a memoryless MIMO teacher corresponds to $W^*\in\mathbb{R}^{m\times n}$, such that $\mathbf{y}=W^*X_k$.

In the main paper we develop an expression for the population loss for the case of SISO. We provide here a  MIMO version of the lemma.

\begin{lemma}\label{lemma:expected_loss_mimo}
Assume $X\in\mathbb{R}^{n\times k}$, $X\sim\mathcal{D}$ such that $\mathbb{E}_{\mathcal{D}}[XX^\top]=I_{n}$ and $\mathbb{E}_{\mathcal{D}}[X]=0$. Then, given a memoryless teacher RNN, the loss for the student RNN satisfies:
\begin{equation}\label{eq:mimo_expected_loss}
    \mathbb{E}_{X,\mathbf{y}}\left[\frac{1}{2}\left\|RNN(X)-\mathbf{y}\right\|_F^2\right]=\frac{1}{2}\sum_{i=1}^{k-1}\left\|CA^{k-i}B\right\|_F^2+\frac{1}{2}\left\|CB-W^*\right\|_F^2
\end{equation}
\end{lemma}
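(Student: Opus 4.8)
The plan is to mirror the scalar computation of Lemma~\ref{lemma:expected_loss_apndx}, but to carry the matrix structure through trace algebra rather than through products of scalars. First I would unroll the MIMO recursion in Equation~\eqref{eq:output_of_rnn_mimo} exactly as Equation~\eqref{eq:unroll_output} does in the SISO case, obtaining $RNN(X)=\sum_{i=1}^{k} C A^{k-i} B X_i \in \mathbb{R}^m$, and then substitute the memoryless teacher label $\mathbf{y}=W^* X_k$. Writing $M_i := C A^{k-i} B \in \mathbb{R}^{m\times n}$ for $i<k$ and $M_k := CB - W^*$, the residual collapses to a single $m$-dimensional vector $RNN(X)-\mathbf{y}=\sum_{i=1}^{k-1} M_i X_i + (CB-W^*) X_k = \sum_{i=1}^{k} M_i X_i$, whose squared Frobenius norm is just its squared Euclidean norm.

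Next I would expand this square as a double sum of inner products and rewrite each as a trace: $\|\sum_i M_i X_i\|^2=\sum_{i,j}\langle M_i X_i, M_j X_j\rangle=\sum_{i,j}\operatorname{tr}(M_i^\top M_j X_j X_i^\top)$, using the cyclic property of the trace. Taking expectations and pushing $\mathbb{E}$ through the finite sum and the linear trace gives $\sum_{i,j}\operatorname{tr}(M_i^\top M_j\,\mathbb{E}[X_j X_i^\top])$. The decisive step is the second-moment structure of $\mathcal{D}$: the intended isotropy of the inputs supplies $\mathbb{E}[X_j X_i^\top]=\delta_{ij} I_n$, the natural MIMO analogue of $\mathbb{E}[x_i x_j]=\delta_{ij}$ used in the SISO proof. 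This annihilates every off-diagonal ($i\neq j$) term and reduces each diagonal term to $\operatorname{tr}(M_i^\top M_i)=\|M_i\|_F^2$.

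Collecting the surviving diagonal terms then yields $\sum_{i=1}^{k}\|M_i\|_F^2=\sum_{i=1}^{k-1}\|C A^{k-i}B\|_F^2+\|CB-W^*\|_F^2$, and dividing by two produces Equation~\eqref{eq:mimo_expected_loss}. The one point requiring genuine care is the second-moment bookkeeping: unlike the scalar case where cross terms are killed outright by $\mathbb{E}[x_i x_j]=0$, here one must be explicit that only the equal-index blocks $\mathbb{E}[X_i X_i^\top]=I_n$ survive and that they turn each diagonal term into a Frobenius norm. I would therefore state the per-time-step cross-covariance condition $\mathbb{E}[X_i X_j^\top]=\delta_{ij} I_n$ explicitly as the working hypothesis, noting that this is the block-diagonal form the isotropy assumption is meant to encode (columns uncorrelated across distinct time steps, each with identity covariance), since the literal scalar-trace identity $\mathbb{E}[XX^\top]=\sum_t \mathbb{E}[X_tX_t^\top]$ only constrains the sum of the diagonal blocks. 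Everything else is the same linear-algebra manipulation as in the SISO argument.
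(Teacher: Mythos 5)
Your proposal is correct and follows essentially the same route as the paper's proof: unroll the recursion, expand the squared Frobenius norm via traces, and let the isotropy/independence of the columns $\mathbb{E}[X_jX_i^\top]=\delta_{ij}I_n$ kill all cross terms, leaving $\sum_{i=1}^{k-1}\|CA^{k-i}B\|_F^2+\|CB-W^*\|_F^2$. Your two departures are cosmetic but welcome: absorbing $-W^*$ into the coefficient $M_k=CB-W^*$ lets you handle one double sum instead of the paper's three separate terms $\operatorname{tr}(Q^\top Q)$, $\operatorname{tr}(Q^\top R)$, $\operatorname{tr}(R^\top R)$, and you are right that the per-time-step condition $\mathbb{E}[X_iX_j^\top]=\delta_{ij}I_n$ is the hypothesis the argument actually uses, since the stated $\mathbb{E}[XX^\top]=I_n$ literally constrains only the sum $\sum_t\mathbb{E}[X_tX_t^\top]$ (the paper's proof also tacitly invokes the stronger per-column form).
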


\begin{proof}
The proof is given in \ref{appendix:mimo_proofs}.
\end{proof}

In the main paper we show that when the sequence length is greater than the hidden dimension, ($k>d$), extrapolation is guaranteed by showing that $\forall j\in \mathbb{N}$ it holds that $CA^jB=0$. The analysis in the main paper is not dependent on the dimensions of $B$ and $C$ and therefore applies to the MIMO setting as-is.

Following the analysis of extrapolation when learning with long sequences, we show that when $k<d$, there exists solutions that attain zero loss but do not extrapolate w.r.t. a memoryless teacher. The proof uses the following parameters,

\[
 A=\begin{pmatrix}  0 & 0 & \dots & 0 & 0 & 1 \\
                    1 & 0 & \dots & 0 & 0 & 0\\
                    0 & 1 & \dots & 0 & 0 & 0\\
                    & & \ddots & & & \\
                    0 & 0 & \dots & 1 & 0 & 0 \\
                    0 & 0 & \dots & 0 & 1 & 0 
    \end{pmatrix} 
    \in \mathbb{R}^{d , d} ~,
\quad
B=\begin{pmatrix}1\\ 0\\ \vdots\\ 0 \end{pmatrix} 
    \in \mathbb{R}^{d , 1} ~,
\quad
C=(w^*,0,\dots,0) \in \mathbb{R}^{1 , d}.
\]
In order to apply for MIMO, the parameters $B$ and $C$ need to be padded with zeros to form,
\[
B=\begin{pmatrix}1 & 0 & \dots & 0\\ 0 & 0 & \dots & 0\\ \vdots & \vdots & \ddots & \vdots\\ 0 & 0 & \dots & 0 \end{pmatrix} 
    \in \mathbb{R}^{d , n} ~,
\quad
C=\begin{pmatrix} w^* & 0 & \dots & 0\\ 0 & 0 & \dots & 0\\ \vdots & \vdots & \ddots & \vdots\\ 0 & 0 & \dots & 0 \end{pmatrix} \in \mathbb{R}^{m , d}.
\]
and the same arguments apply.

In the main paper we show GD has implicit bias towards memoryless solutions under standard initialization schemes. Here we show that this result extends to the MIMO case, under the additional conditions that $m=n$ and $W^*$ is symmetric and nonnegative.


The SISO proof follows two steps. The first shows that at any time step of GD, $B_t=C_t^T$ and $A_t$ is symmetric. For the first part of the proof to apply for the MIMO setting, the dimensions of $B$ and $C$ must allow $B_0=C_0^T$ which implies $n=m$. 

The gradient updates in the MIMO case are similar to those of SISO (see Appendix~\ref{app:missing_proofs}) and are given by:\begin{equation}\label{eq:mimo_grad_b}
    \frac{\partial \mathcal{L}}{\partial B}=\sum_{i=1}^{k-1}(A^T)^iC^TCA^iB+C^T(CB-W^*),
\end{equation}
\begin{equation}\label{eq:mimo_grad_c}
\frac{\partial \mathcal{L}}{\partial C}=\sum_{i=1}^{k-1}CA^iBB^T(A^T)^i+(CB-W^*)B^T ,   
\end{equation}
\begin{equation}\label{eq:mimo_grad_a}
\frac{\partial \mathcal{L}}{\partial A}=\sum_{i=1}^{k-1}\sum_{r=0}^{i-1}(A^T)^rC^TCA^iBB^T(A^T)^{i-r-1}.    
\end{equation}
The same inductive argument from the main text applies here with the distinction that in order for the RHS of \eqref{eq:mimo_grad_b} and \eqref{eq:mimo_grad_c} to satisfy
\begin{equation*}
    C_t^\top(C_tB_t-W^*)=\left[(C_tB_t-W^*)B_t^\top \right]^\top
\end{equation*}
the matrix $W^*$ must be symmetric.

For the second part of the proof, we use the fact that $B = C^\top$ and $A = A^\top$ to show that at convergence $C A^j B=0$ for all $j\in \mathbb{N}$.
%
Recalling that $k > 2$, consider the optimized loss:\footnote{The leftmost term is zero by definition if $k=3$.}
\begin{equation}\label{eq:loss_for_k_ge_3_mimo}
    \mathcal{L} ( A , B , C ) = \sum_{i=3}^{k-1}\|CA^iB\|_F^2+\|CA^2B\|_F^2+\|CAB\|_F^2+\|CB-W^*\|_F^2
\end{equation}
Any solution minimizing (i.e., bringing to zero) the above must satisfy:
\begin{equation}\label{eq:psd_form_vanishes_mimo}
    CA^2B=0\in\mathbb{R}^{n\times n}. 
\end{equation}
By the assumption of the theorem we have that GD converges to a minimizing solution and therefore satisfies Equation \eqref{eq:psd_form_vanishes}.
    Also, by the first part of the proof, we know that $A$~is symmetric and therefore orthogonally diagonalizable, meaning there exist an orthogonal matrix $V \in \mathbb{R}^{d,d}$ and a diagonal matrix $D \in \mathbb{R}^{d , d}$ such that $A = V D V^\top$.
    We can thus write $A^2 = V D^2 V^\top$, and since $B = C^\top$ (by the first part of the proof),
    
    Equation~\eqref{eq:psd_form_vanishes} implies:
    
    \begin{equation*}
        CA^2B=B^\top A^2B=B^\top VD^2V^\top B=0\in\mathbb{R}^{n\times n}.
    \end{equation*}
    Denote $U=V^\top B$, the above can be written as
    \begin{equation*}
       B^\top VD^2V^\top B = U^\top D^2U  = 0\in\mathbb{R}^{n\times n}. 
    \end{equation*}
    The above matrix is element-wise zero, in particular its diagonal elements should be zero, implying for all $i$:
 \begin{equation}\label{eq:mimo_comp_slack}
 \left[U^\top D^2U\right]_{ii}=\sum_{s=1}^d U^2_{si}\lambda_s^2 = 0
 \end{equation}
Since Equation~\eqref{eq:mimo_comp_slack} is a sum of non-negative elements that sum to zero, each of them should be zero. Furthermore, for any $s$, it must hold that $U_{si}^2\lambda_s^2=0$ and therefore we have the complementary slackness result:
\begin{equation}\label{eq:comp_slack_element}
    U_{si}\lambda_s = 0 \ \ \forall i,s
\end{equation}
    
The fact that the model extrapolates follows directly from the observation above. Consider any $p \in \mathbb{N}$. Then the corresponding element in the impulse response is given by:
\begin{equation*}
CA^pB=B^\top VD^pV^\top B=U^\top D^pU
\end{equation*}
which can be written as
\begin{equation*}
    \left[ U^\top D^pU \right]_{ij} = \sum_{s=1}^d U_{si}U_{sj}\lambda_s^p
\end{equation*}
From Equation \eqref{eq:comp_slack_element} we conclude that the above is zero and thus $CA^pB=0$ (i.e., this part of the matrix impulse response is zero).
This is precisely the condition for perfect extrapolation (see main text and recall that $CB-W^*=0$ because of optimality of GD) and thus the result follows.

\subsection{Population Loss for MIMO}\label{appendix:mimo_proofs}

\begin{proof}[Proof for Lemma~\ref{lemma:expected_loss_mimo}]
\begin{equation*}
    \mathbb{E}_{X,\mathbf{y}}\left[\frac{1}{2}\left\|RNN(X)-\mathbf{y}\right\|_F^2\right]=\frac{1}{2}\mathbb{E}\left[ \left\| \sum_{i=1}^k  CA^{k-i}BX_i -W^*X_k \right\|_F^2 \right]
\end{equation*}

For two general matrices $Q,R$, the loss in terms of the trace operator is given by
\begin{align}\label{eq:mimo_trace}
    \|Q-R\|_F^2&=tr(\left(Q-R\right)^T\left(Q-R\right))\nonumber\\
    &=tr\left(Q^TQ-Q^TR-R^TQ+R^TR\right)\nonumber\\
    &=tr(Q^TQ)-tr(Q^TR)-tr(R^TQ)+tr(R^TR)\nonumber\\
    &=tr(Q^TQ)-2tr(Q^TR)+tr(R^TR)
\end{align}
where the transitions rely on the properties of the trace operator. We can now handle each term separately, denote $W_i=CA^{k-i}B$, assigning $Q=\sum_{i=1}^{k}W_iX_i$, the LHS term, $tr(Q^TQ)$, amounts to
\begin{align*}
    tr\left( \left(\sum_{i=1}^kX_i^TW_i^T \right) \left(\sum_{j=1}^kW_jX_j \right) \right)&=tr\left(\sum_{i=1}^{k}\sum_{j=1}^{k}X_i^TW_i^TW_jX_j\right)\\
    &=\sum_{i=1}^{k}\sum_{j=1}^{k}tr\left(X_i^TW_i^TW_jX_j\right)\\
    &=\sum_{i=1}^{k}\sum_{j=1}^{k}tr\left(W_i^TW_jX_jX_i^T\right)
\end{align*}
Taking the expectation of IID samples $X_i,X_j$,
\begin{equation*}
    \mathbb{E}\left[ tr\left(W_i^TW_jX_jX_i^T\right) \right]=tr\left(W_i^TW_j\mathbb{E}\left[X_jX_i^T\right]\right)=\begin{cases}0 & i\neq j\\ tr(W_i^TW_j) & i=j\end{cases}
\end{equation*}

putting together, the LHS term amounts to
\begin{equation}\label{eq:mimo_loss_term1}
    \mathbb{E}\left[tr(Q^TQ)\right]=\sum_{i=1}^{k}tr(W_i^TW_i)=\sum_{i=1}^{k}\|W_i\|_F^2=\sum_{i=1}^{k}\left\|CA^{k-i}B\right\|_F^2
\end{equation}

For the second term, $tr(Q^TR)$, we have
\begin{equation*}
    tr\left( \sum_{i=1}^{k}X_i^TW_i^TW^*X_k \right)=\sum_{i=1}^{k}tr\left( X_i^TW_i^TW^*X_k \right)=\sum_{i=1}^{k}tr\left( W_i^TW^*X_kX_i^T \right)
\end{equation*}
Taking the expectation, for every $i\neq k$, $\mathbb{E}\left[ X_kX_i^T\right]=0$, and for $i=k$, $\mathbb{E}\left[ X_kX_k^T \right]=I_n$. Therefore the middle term amounts to
\begin{equation}\label{eq:mimo_loss_term2}
    \mathbb{E}\left[ tr(Q^TR) \right]=tr\left(W_k^TW^*\right)
\end{equation}
Finally, the RHS is given by
\begin{equation}\label{eq:mimo_loss_term3}
    \mathbb{E}\left[ tr(R^TR) \right]=tr\left( (W^*)^TW^*\right)
\end{equation}
where we again use the linearity and cyclic properties of the trace operator as well as $\mathbb{E}\left[X_kX_k^T\right]=I_n$.

Putting the computed terms, \eqref{eq:mimo_loss_term1} \eqref{eq:mimo_loss_term2} \eqref{eq:mimo_loss_term3}, back into Equation~\eqref{eq:mimo_trace}, we have
\begin{equation*}
\mathbb{E}\left[ \left\| \sum_{i=1}^k  CA^{k-i}BX_i -W^*X_k \right\|_F^2 \right]=\sum_{i=1}^{k}\left\|CA^{k-i}B\right\|_F^2 -2tr\left(W_k^TW^*\right)+tr\left( (W^*)^TW^*\right)
\end{equation*}
Note that $W_k=CA^{k-k}B=CB$, the above can be written as
\begin{equation}
    \sum_{i=1}^{k-1}\left\|CA^{k-i}B\right\|_F^2+\|CB\|_F^2 -2tr\left((CB)^TW^*\right)+\|W^*\|_F^2
\end{equation}
which can further be written as
\begin{equation}
    \sum_{i=1}^{k-1}\left\|CA^{k-i}B\right\|_F^2+\|CB-W^*\|_F^2
\end{equation}
to conclude the proof.
\end{proof}

\section{Additional Experiments}
In the paper we show that GD has an inductive bias towards memory-less models. Namely, if the training data can be fit with a memory-less model, gradient descent with symmetric initialization will extrapolate well. Here we ask the more general question: if data is generated by a low dimensional LinearRNN (i.e., with low dimensional $A$), will GD extrapolate well. Namely, we ask whether gradient descent with symmetric initialization has an inductive bias towards low-dimensional systems.

Clearly, if the training sequences are shorter than the dimension of the ground-truth $A$, we should not expect to extrapolate well (since the short sequence does not capture the full behavior of the true model).

In what follows, we use $d^*$ to denote the dimension of $A$ for the ground-truth system. We let $k$ denote the length of the training data. Based on our discussion above, we would expect the following two regimes:
\begin{itemize}
    \item Good extrapolation for  $k\ge d^{*}$, since in this case there are sufficient observations to identify a low dimensional model and the data can be fit by this model. Moreoever, if GD with the said initialization scheme is indeed biased towards low order models, it will converge to the model with dimension $d^*$.
    \item Bad extrapolation for $k<d^*$ since in this case the first $k$ time units are insufficient to uniquely identify the ground-truth model.
\end{itemize}

We explore the above question using three different models, LinearRNN, GRU and LSTM with standard Xavier initialization. For all experiments, we set $k=5$, $d=200$ and $d^*=1,2,4,6,8$.

The architecture of the teacher is a LinearRNN with varying $d^*$. For each trained model, we estimate the extrapolation MSE as the average error of the model on sequence lengths $6, 7, 8, 9, 10$ (i.e., lengths it was not trained on). Figure \ref{fig:model_learning_increasing_memory} shows extrapolation error as a function of $d^*$. It can be seen that results are in line with the two regimes mentioned above. Namely, up to some point (roughly $d^*=k$), the model extrapolates well, and beyond this point extrapolation deteriorates.

These results suggest that gradient descent with standard initialization is indeed biased towards models with smaller dimensionality $d$. Furthermore, this happens for both linear and non linear models.  

\begin{figure}[h!]
\centering
\includegraphics[width=0.95\textwidth]{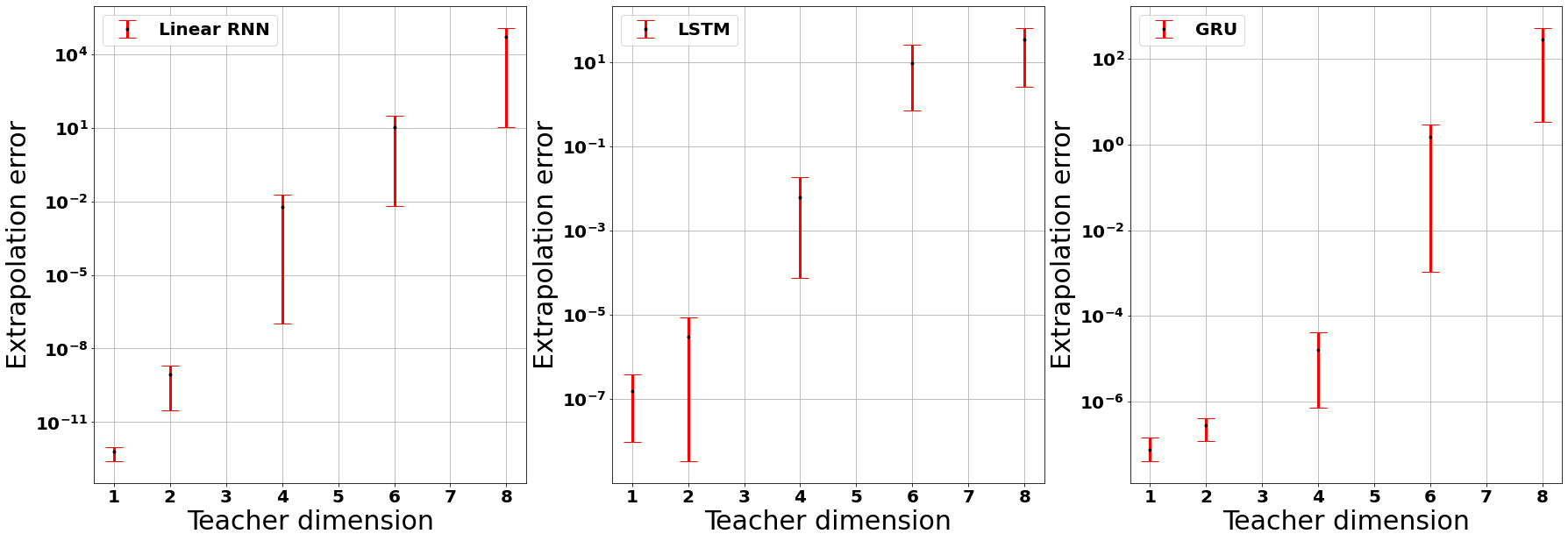}
\caption{Extrapolation error as a function of the teacher dimension. The figure shows that when $d^*<k$ there is good extrapolation indicating inductive bias towards low dimensional model. On the other hand for $d^*>k$ extrapolation fails, which is expected as the training examples are not long enough to reveal the teacher dynamics for sequences with length greater than $k$.}
\label{fig:model_learning_increasing_memory}
\end{figure}


\begin{thebibliography}{}

\bibitem[Antsaklis and Michel, 2006]{lds}
Antsaklis, P.~J. and Michel, A.~N. (2006).
\newblock {\em Linear systems}.
\newblock Springer Science \& Business Media.

\bibitem[Arora et~al., 2018]{arora2018convergence}
Arora, S., Cohen, N., Golowich, N., and Hu, W. (2018).
\newblock A convergence analysis of gradient descent for deep linear neural
  networks.
\newblock {\em arXiv preprint arXiv:1810.02281}.

\bibitem[Arora et~al., 2019]{arora2019implicit}
Arora, S., Cohen, N., Hu, W., and Luo, Y. (2019).
\newblock Implicit regularization in deep matrix factorization.

\bibitem[Cho et~al., 2014]{cho2014learning}
Cho, K., Van~Merri{\"e}nboer, B., Gulcehre, C., Bahdanau, D., Bougares, F.,
  Schwenk, H., and Bengio, Y. (2014).
\newblock Learning phrase representations using rnn encoder-decoder for
  statistical machine translation.
\newblock {\em arXiv preprint arXiv:1406.1078}.

\bibitem[Emami et~al., 2021]{emami2021implicit}
Emami, M., Sahraee-Ardakan, M., Pandit, P., Rangan, S., and Fletcher, A.~K.
  (2021).
\newblock Implicit bias of linear rnns.

\bibitem[Fazel et~al., 2001]{fazel2001rank}
Fazel, M., Hindi, H., and Boyd, S.~P. (2001).
\newblock A rank minimization heuristic with application to minimum order
  system approximation.
\newblock In {\em Proceedings of the 2001 American Control Conference.(Cat. No.
  01CH37148)}, volume~6, pages 4734--4739. IEEE.

\bibitem[Frobenius, 1877]{Frobenius1877}
Frobenius, G. (1877).
\newblock Ueber lineare substitutionen und bilineare formen.
\newblock {\em Journal für die reine und angewandte Mathematik}, 84:1--63.

\bibitem[Ghahramani and Hinton, 1996]{ghahramani1996parameter}
Ghahramani, Z. and Hinton, G.~E. (1996).
\newblock Parameter estimation for linear dynamical systems.
\newblock Technical report, Technical Report CRG-TR-96-2, University of
  Totronto, Dept. of Computer Science.

\bibitem[Glorot and Bengio, 2010]{glorot2010understanding}
Glorot, X. and Bengio, Y. (2010).
\newblock Understanding the difficulty of training deep feedforward neural
  networks.
\newblock In {\em Proceedings of the thirteenth international conference on
  artificial intelligence and statistics}, pages 249--256. JMLR Workshop and
  Conference Proceedings.

\bibitem[Glover, 1984]{glover1984all}
Glover, K. (1984).
\newblock All optimal hankel-norm approximations of linear multivariable
  systems and their $l\infty$ error bounds.
\newblock {\em International journal of control}, 39(6):1115--1193.

\bibitem[Gunasekar et~al., 2018]{gunasekar2018characterizing}
Gunasekar, S., Lee, J., Soudry, D., and Srebro, N. (2018).
\newblock Characterizing implicit bias in terms of optimization geometry.
\newblock In {\em International Conference on Machine Learning}, pages
  1832--1841. PMLR.

\bibitem[Hardt et~al., 2016]{hardt2016gradient}
Hardt, M., Ma, T., and Recht, B. (2016).
\newblock Gradient descent learns linear dynamical systems.
\newblock {\em arXiv preprint arXiv:1609.05191}.

\bibitem[Ho and K{\'a}lm{\'a}n, 1966]{ho1966effective}
Ho, B. and K{\'a}lm{\'a}n, R.~E. (1966).
\newblock Effective construction of linear state-variable models from
  input/output functions.
\newblock {\em at-Automatisierungstechnik}, 14(1-12):545--548.

\bibitem[Hochreiter and Schmidhuber, 1997]{hochreiter1997long}
Hochreiter, S. and Schmidhuber, J. (1997).
\newblock Long short-term memory.
\newblock {\em Neural computation}, 9(8):1735--1780.

\bibitem[Jacot et~al., 2020]{jacot2020neural}
Jacot, A., Gabriel, F., and Hongler, C. (2020).
\newblock Neural tangent kernel: Convergence and generalization in neural
  networks.

\bibitem[Ji and Telgarsky, 2019]{ji2019gradient}
Ji, Z. and Telgarsky, M. (2019).
\newblock Gradient descent aligns the layers of deep linear networks.
\newblock In {\em 7th International Conference on Learning Representations,
  ICLR 2019}.

\bibitem[Kalman, 1960]{kalman1960general}
Kalman, R.~E. (1960).
\newblock On the general theory of control systems.
\newblock In {\em Proceedings First International Conference on Automatic
  Control, Moscow, USSR}, pages 481--492.

\bibitem[Kalman, 1963]{kalman1963mathematical}
Kalman, R.~E. (1963).
\newblock Mathematical description of linear dynamical systems.
\newblock {\em Journal of the Society for Industrial and Applied Mathematics,
  Series A: Control}, 1(2):152--192.

\bibitem[Kingma and Ba, 2017]{kingma2017adam}
Kingma, D.~P. and Ba, J. (2017).
\newblock Adam: A method for stochastic optimization.

\bibitem[Kunin et~al., 2020]{kunin2020neural}
Kunin, D., Sagastuy-Brena, J., Ganguli, S., Yamins, D.~L., and Tanaka, H.
  (2020).
\newblock Neural mechanics: Symmetry and broken conservation laws in deep
  learning dynamics.
\newblock {\em arXiv preprint arXiv:2012.04728}.

\bibitem[Liu and Vandenberghe, 2010]{liu2010interior}
Liu, Z. and Vandenberghe, L. (2010).
\newblock Interior-point method for nuclear norm approximation with application
  to system identification.
\newblock {\em SIAM Journal on Matrix Analysis and Applications},
  31(3):1235--1256.

\bibitem[Ljung, 1999]{ljung1999system}
Ljung, L. (1999).
\newblock System identification.
\newblock {\em Wiley encyclopedia of electrical and electronics engineering},
  pages 1--19.

\bibitem[Petersen and Pedersen, 2012]{petersen2012matrix}
Petersen, K. and Pedersen, M. (2012).
\newblock The matrix cookbook, version 20121115.
\newblock {\em Technical Univ. Denmark, Kongens Lyngby, Denmark, Tech. Rep},
  3274.

\bibitem[Saxe et~al., 2013]{saxe2013exact}
Saxe, A.~M., McClelland, J.~L., and Ganguli, S. (2013).
\newblock Exact solutions to the nonlinear dynamics of learning in deep linear
  neural networks.
\newblock {\em arXiv preprint arXiv:1312.6120}.

\bibitem[Woodworth et~al., 2020]{WoodworthGLMSGS20}
Woodworth, B.~E., Gunasekar, S., Lee, J.~D., Moroshko, E., Savarese, P., Golan,
  I., Soudry, D., and Srebro, N. (2020).
\newblock Kernel and rich regimes in overparametrized models.
\newblock In Abernethy, J.~D. and Agarwal, S., editors, {\em Conference on
  Learning Theory, {COLT} 2020, 9-12 July 2020, Virtual Event [Graz, Austria]},
  volume 125 of {\em Proceedings of Machine Learning Research}, pages
  3635--3673. {PMLR}.

\bibitem[Xu et~al., 2020]{xu2020neural}
Xu, K., Zhang, M., Li, J., Du, S.~S., Kawarabayashi, K.-i., and Jegelka, S.
  (2020).
\newblock How neural networks extrapolate: From feedforward to graph neural
  networks.
\newblock {\em arXiv preprint arXiv:2009.11848}.

\bibitem[Zhang et~al., 2017]{ZhangBHRV17}
Zhang, C., Bengio, S., Hardt, M., Recht, B., and Vinyals, O. (2017).
\newblock Understanding deep learning requires rethinking generalization.
\newblock In {\em 5th International Conference on Learning Representations,
  {ICLR} 2017, Toulon, France, April 24-26, 2017, Conference Track
  Proceedings}. OpenReview.net.

\bibitem[Zhang, 1997]{zhang1997quaternions}
Zhang, F. (1997).
\newblock Quaternions and matrices of quaternions.
\newblock {\em Linear algebra and its applications}, 251:21--57.

\end{thebibliography}












\end{document}